\newcommand{\tensor}[1]{\bm {\mathcal{#1}}}
\setlist{nolistsep}
\newcounter{ALC@tempcntr}
\newcommand{\hide}[1]{}
\newcommand{\ben}{\begin{enumerate*}}
\newcommand{\een}{\end{enumerate*}}
\newcommand{\bit}{\begin{itemize*}}
\newcommand{\eit}{\end{itemize*}}
\DeclareMathOperator*{\argmin}{arg\,min}
\begin{document}

\name{Georgios Tsitsikas, Evangelos E. Papalexakis}
  \address{ Dept. of Computer Science and Engineering\\
  University of California, Riverside}
\title{THE CORE CONSISTENCY OF A  COMPRESSED TENSOR}
\maketitle
\begin{abstract}
Tensor decomposition on big data has attracted significant attention recently. Among the most popular methods is a class of algorithms that leverages compression in order to reduce the size of the tensor and potentially parallelize computations. A fundamental requirement for such methods to work properly is that the low-rank tensor structure is retained upon compression. In lieu of efficient and realistic means of computing and studying the effects of compression on the low rank of a tensor, we study the effects of compression on the core consistency; a widely used heuristic that has been used as a proxy for estimating that low rank. We provide theoretical analysis, where we identify sufficient conditions for the compression such that the core consistency is preserved, and we conduct extensive experiments that validate our analysis. Further, we explore popular compression schemes and how they affect the core consistency.
\end{abstract}

\begin{keywords}
Tensor decomposition, tensor rank, core consistency, CORCONDIA, compression
\end{keywords}

\section{Introduction}
\label{sec:intro}
In recent years, there has been a tremendous increase in the amount of data being available in many application areas of interest \cite{PapFalSid:TIST2016}. These data are many times multi-aspect and, therefore, they are very elegantly described by multidimensional arrays where each index of the array corresponds to a specific aspect of the data. 

Tensors, which are very closely tied to multidimensional arrays, have proved to be a very useful framework for analyzing and extracting structure from these data, which is usually achieved by employing tensor decompositions. A plethora of such decompositions has been suggested \cite{PARAFAC,tucker3,sidiropoulos2016tensor}, many of which have also seen significant algorithmic advances especially when dealing with big data \cite{kolda2008scalable,papalexakis2012parcube,kang2012gigatensor}. 

A particular scheme of interest, though, is the one where the tensor data are randomly compressed before being decomposed \cite{sidiropoulosparallel,sidiropoulos2014parallel,yang2018parasketch}. However, even though the analysis in those papers is sound, all results are predicated on the fact that the rank is preserved during this compression/sketching step. In practical cases, we have observed this to be true, but, to the best of our knowledge, there is no analysis of this phenomenon. Thus, in this paper we set out to investigate the effects of such compression on the rank of a tensor, assuming that it has low-rank trilinear structure.

To this end, and realizing that tensor rank calculation is an NP-hard problem \cite{haastad1990tensor}, we will focus on the impact of compression on the Core Consistency Diagnostic \cite{bro1998multi,bro2003new}, which is the most widely used heuristic for judging the trilinearity of a tensor and identifying the rank \cite{papalexakis2016autoten,kamstrup2013core,holbrook2006characterizing}. Specifically, our contributions are:
\begin{itemize}
    \item {\bf Theoretical analysis}: We provide a sufficient condition for the compression matrices, under which the compressed tensor preserves the Core Consistency.
    \item {\bf Extensive experimental evaluation}: We thoroughly evaluate our theoretical results using real sugar data that have been chemically verified for their trilinearity \cite{bro2003new}. To achieve this, various experiments with different schemes of compression were carried out, and their efficiency in retaining the Core Consistency of a tensor with low-rank trilinear structure is discussed. 
\end{itemize}


\section{Problem Formulation}
\label{sec:problem}

\subsection{Notation \& Definitions}
An $N$-mode tensor $\tensor{X}$ can be defined as an element of the tensor product of $N$ vector spaces, and by choosing bases for these spaces, it can be described as a multidimensional array of numbers. Specifically, for the tensor product of $N$ vector spaces of dimensions $I_1$, $I_2$, $\cdots$, $I_N$, by choosing bases in $\mathbb{R}^{I_1}$, $\mathbb{R}^{I_2}$, $\cdots$, $ \mathbb{R}^{I_N}$, respectively, $\tensor{X}$ can be expressed as an element of $\mathbb{R}^{I_1\times I_2 \times \cdots \times I_N}$. 

\textbf{Mode-n Fiber:} A mode-$n$ fiber of $\tensor{X}$ is the column vector $\tensor{X}( \cdots, i_{n-1}, :, \cdots)$, whose elements are the elements of $\tensor{X}$ for $i_n = 1,\cdots, I_n$ while the rest of the indices are fixed. 

\textbf{n-Mode Product:} The $n$-mode product of $\tensor{X}$ with a matrix $\mathbf{Z} \in \mathbb{R}^{K\times I_n}$, is denoted by $\tensor{X}\times_n \mathbf{Z}$, and results in a tensor whose $n$-mode fibers are the $n$-mode fibers of $\tensor{X}$ multiplied by $\mathbf{Z}$. In other words, 
\begin{equation*}
    (\tensor{X}\times_n \mathbf{Z})( \cdots, i_{n-1}, :, \cdots) = \mathbf{Z} \cdot \tensor{X}(\cdots, i_{n-1}, :, \cdots)
\end{equation*}
Note that $(\tensor{X}\times_n \mathbf{Z})  \in \mathbb{R}^{I_1 \times \cdots \times I_n-1 \times K \times \cdots \times I_N} $.

\textbf{Frobenius Norm:} The Frobenius Norm $||\cdot||$ of a tensor $\tensor{X}$ is defined as 
\begin{equation*}
    ||\tensor{X}||=\sqrt{\sum_{i_1=1}^{I_1}\sum_{i_2=1}^{I_2}\cdots\sum_{i_N=1}^{I_N}\tensor{X}(i_1,i_2,\cdots,i_N)^2}
\end{equation*}
\subsection{Tensor Decompositions}
\label{sec:tens_dec}
Many times we are interested in expressing $\tensor{X}$ in a decomposed form, since this can be instrumental in different data analytics scenarios \cite{PapFalSid:TIST2016,sidiropoulos2016tensor}.


Particularly, we can decompose $\tensor{X}$ as the sum of rank-one tensors. In this paper, we will only consider $3$-mode tensors, and, therefore, these rank-one tensors will be the outer product of three vectors, $\mathbf{a}_p \in \mathbb{R}^I$ with $p=1,\dots,P$, $\mathbf{b}_q \in \mathbb{R}^J$ with $q=1,\dots,Q$, and $\mathbf{c}_r \in \mathbb{R}^K$ with $r=1,\dots,R$. These vectors can be grouped as the columns of three factor matrices $\mathbf{A}$, $\mathbf{B}$ and $\mathbf{C}$, respectively, so that $\mathbf{A} \in \mathbb{R}^{I\times P}$, $\mathbf{B} \in \mathbb{R}^{J\times Q}$ and $\mathbf{C} \in \mathbb{R}^{K\times R}$.

One such decomposition is PARAFAC \cite{PARAFAC,bro1997parafac}, which allows us to express $\tensor{X}$ as
    \begin{equation}
        \tensor{X}  = \sum_{r=1}^R \mathbf{a}_r \circ     \mathbf{b}_r \circ  \mathbf{c}_r 
        \label{parafac_out}
    \end{equation}  
or equivalently
    \begin{equation}
        \tensor{X} = \tensor{I} \times_1 \mathbf{A} \times_2 \mathbf{B} \times_3 \mathbf{C} 
        \label{parafac}
    \end{equation}
where $R$ is the number of components, and $\tensor{I}(i,j,k)$ is 1 for $i=j=k$ and zero everywhere else.

TUCKER3 is another useful decomposition \cite{tucker3}, which generalizes PARAFAC, and allows us to express $\tensor{X}$ as
\begin{equation}
        \tensor{X}  = \sum_{p=1}^P\sum_{q=1}^Q\sum_{r=1}^R \tensor{G}(p,q,r) \cdot \mathbf{a}_p \circ     \mathbf{b}_q \circ  \mathbf{c}_r 
         \label{tucker_out}
\end{equation} 
or equivalently
    \begin{equation}
        \tensor{X} = \tensor{G} \times_1 \mathbf{A} \times_2 \mathbf{B} \times_3 \mathbf{C}
        \label{tucker}
    \end{equation}
where $\tensor{G}$ is called the TUCKER3 core.

Additionally, since in our work we assume low-rank structure,  we will only consider tall PARAFAC and tall orthonormal TUCKER3 factor matrices. The orthonormality assumption might seem restrictive at first glance, but observe that for non-orthonormal tall TUCKER3 factor matrices we can employ their reduced QR factorization to get
    \begin{align*}
        \tensor{X} &= \tensor{G} \times_1 \mathbf{Q_A R_A} \times_2 \mathbf{Q_B R_B} \times_3 \mathbf{Q_C R_C}\\
        &= (\tensor{G} \times_1 \mathbf{R_A} \times_2 \mathbf{R_B} \times_3 \mathbf{R_C})\times_1 \mathbf{Q_A} \times_2 \mathbf{Q_B} \times_3 \mathbf{Q_C}\\
        &= \tensor{\widetilde{G}} \times_1 \mathbf{Q_A} \times_2 \mathbf{Q_B} \times_3 \mathbf{Q_C}
    \end{align*}

Finally, it should be mentioned that for different values of $R$ in \eqref{parafac_out} we get different decompositions, and the same is true for different values of $P$, $Q$ and $R$ in \eqref{tucker_out}. Keep in mind, however, that for some values an exact decomposition may not exist at all. 
\subsection{Core Consistency Diagnostic}
The Core Consistency Diagnostic (CORCONDIA) \cite{bro1998multi,bro2003new} is defined as 
\begin{equation*}
    \left(1-\frac{||\tensor{I}-\tensor{G}||^2}{||\tensor{I}||^2}\right)\cdot 100
\end{equation*}
where
\begin{equation}
    \tensor{G} = \tensor{X}\times_1 \mathbf{A}^+\times_2 \mathbf{B}^+\times_3 \mathbf{C}^+
    \label{corc_core}
\end{equation}
and  $\mathbf{A}^+$, $\mathbf{B}^+$ and $\mathbf{C}^+$ are the Moore-Penrose inverses of the PARAFAC factor matrices of $\tensor{X}$.

Expression \eqref{corc_core} is obtained as the minimum norm solution of the least squares problem
\begin{equation*}
       \argmin_{\tensor{G}}||\tensor{X}-\tensor{G}\times_1 \mathbf{A}\times_2 \mathbf{B}\times_3 \mathbf{C}||
\end{equation*}
Note that PARAFAC can be expressed as the solution of the least squares problem
\begin{equation*}
       \argmin_{\mathbf{A}, \mathbf{B}, \mathbf{C}}||\tensor{X}-\tensor{I}\times_1 \mathbf{A}\times_2 \mathbf{B}\times_3 \mathbf{C}||
\end{equation*}
Hence, we can see that CORCONDIA essentially attempts to quantify how well a PARAFAC decomposition describes $\tensor{X}$, by comparing to how well $\tensor{X}$ can be described when interactions between all the columns of $\mathbf{A}$, $\mathbf{B}$ and $\mathbf{C}$ are allowed. 

Specifically, when these interactions do not improve the model significantly, then we can interpret it as a sign that the PARAFAC model is appropriate. Further, $\tensor{G}$ will have its dominant elements on the diagonal, and, thus, CORCONDIA will have a value close to 100.
On the other hand, when the interactions produce a substantially better model, then the PARAFAC model is probably not appropriate. In fact, $\tensor{G}$ will have many off-diagonal elements, which will result in a close to zero, or even negative CORCONDIA.

This means that we can evaluate CORCONDIA on a range of PARAFAC decompositions with different number of components, $R$, and select the one with the largest number of components that also retains a reasonably high CORCONDIA value. By using this method, we hope to discover a PARAFAC model that best describes potential trilinear variation in our data.

\section{Proposed Analysis}
\label{sec:method}

Although CORCONDIA is a very useful diagnostic for discovering trilinear variation in data, there can be times where it becomes very computationally and memory intensive in practice. Specifically, if we consider very large tensors, not only the PARAFAC factor matrices and their pseudoinverses can take prohibitive amounts of time to calculate, but in cases where the whole tensor cannot fit into the main memory, performance can deteriorate substantially.

For this reason, it would be useful to have in our disposal a tool that allows us to extract and study a much smaller version of the tensor that, despite its small size, retains most of the systematic variation. However, since finding such a compressed tensor can also be computationally expensive, we need to settle for a trade-off between how fast and how accurately it can be generated. 

To this end, we propose to study the statistics of multiple randomly compressed tensors, $\tensor{X}'$, by employing $n$-mode products of the uncompressed tensor, $\tensor{X}$, with matrices $\mathbf{U} \in \mathbb{R}^{L\times I}$, $\mathbf{V} \in \mathbb{R}^{M\times J}$ and $\mathbf{W} \in \mathbb{R}^{N\times K}$ having orthonormal rows, so that $\tensor{X}'$ can be expressed as 
\begin{equation*}
    \tensor{X}' = \tensor{X} \times_1 \mathbf{U} \times_2 \mathbf{V} \times_3 \mathbf{W} 
\end{equation*}
where $\tensor{X}' \in \mathbb{R}^{L \times M \times N}$ with $L<I$, $M<J$ and $N<K$.

The main reason for selecting this kind of compression is on one hand because of its simplicity, but at the same time because it allows us to derive elegant and useful theoretical results, as shown in the following claims.

\newtheorem{claim}{Claim}

\begin{claim}\label{claim_tucker}
When $\tensor{X}$ has an exact PARAFAC decomposition and its mode-1, mode-2 and mode-3 fibers belong in the rowspace of $\mathbf{U}$, $\mathbf{V}$ and $\mathbf{W}$, respectively, then CORCONDIA is preserved.
\end{claim}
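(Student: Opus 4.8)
The plan is to prove the statement in the strongest possible form, namely that the Tucker core entering CORCONDIA for the compressed tensor is \emph{exactly equal} to the one for the original tensor, so the diagnostic is preserved identically rather than merely approximately. First I would pin down the PARAFAC factors of $\tensor{X}'$. Substituting the exact decomposition $\tensor{X}=\tensor{I}\times_1\mathbf{A}\times_2\mathbf{B}\times_3\mathbf{C}$ into $\tensor{X}'=\tensor{X}\times_1\mathbf{U}\times_2\mathbf{V}\times_3\mathbf{W}$ and collapsing the repeated mode-$n$ products via $\tensor{Y}\times_n\mathbf{M}\times_n\mathbf{N}=\tensor{Y}\times_n(\mathbf{N}\mathbf{M})$ gives $\tensor{X}'=\tensor{I}\times_1(\mathbf{U}\mathbf{A})\times_2(\mathbf{V}\mathbf{B})\times_3(\mathbf{W}\mathbf{C})$. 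Thus the compressed PARAFAC factors are $\mathbf{U}\mathbf{A}$, $\mathbf{V}\mathbf{B}$, $\mathbf{W}\mathbf{C}$, and by \eqref{corc_core} the compressed core is $\tensor{G}'=\tensor{X}'\times_1(\mathbf{U}\mathbf{A})^+\times_2(\mathbf{V}\mathbf{B})^+\times_3(\mathbf{W}\mathbf{C})^+$.

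Next I would recast the fiber hypotheses algebraically. Because $\mathbf{U},\mathbf{V},\mathbf{W}$ have orthonormal rows, the matrices $\mathbf{P_U}:=\mathbf{U}^T\mathbf{U}$, $\mathbf{P_V}:=\mathbf{V}^T\mathbf{V}$, $\mathbf{P_W}:=\mathbf{W}^T\mathbf{W}$ are the orthogonal projectors onto the corresponding rowspaces. The assumption that every mode-$n$ fiber of $\tensor{X}$ lies in the relevant rowspace is exactly the statement that projection leaves each fiber fixed, i.e.\ $\tensor{X}\times_1\mathbf{P_U}=\tensor{X}$, $\tensor{X}\times_2\mathbf{P_V}=\tensor{X}$, $\tensor{X}\times_3\mathbf{P_W}=\tensor{X}$, and therefore $\tensor{X}\times_1\mathbf{P_U}\times_2\mathbf{P_V}\times_3\mathbf{P_W}=\tensor{X}$. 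Moreover, writing the mode-1 unfolding as $\mathbf{A}(\mathbf{C}\odot\mathbf{B})^T$ with a full-column-rank Khatri--Rao factor (which holds for the tall, full-rank factors we assume), the same hypotheses force $\mathbf{P_U}\mathbf{A}=\mathbf{A}$, $\mathbf{P_V}\mathbf{B}=\mathbf{B}$, $\mathbf{P_W}\mathbf{C}=\mathbf{C}$; in particular $\mathbf{U}\mathbf{A},\mathbf{V}\mathbf{B},\mathbf{W}\mathbf{C}$ stay full column rank, so their pseudoinverses have the usual closed form.

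The hard part, and the real content of the claim, is a pseudoinverse-of-a-product step, since in general $(\mathbf{M}\mathbf{N})^+\neq\mathbf{N}^+\mathbf{M}^+$; here it is rescued precisely by the orthonormal rows of $\mathbf{U}$ together with $\mathrm{col}(\mathbf{A})\subseteq\mathrm{row}(\mathbf{U})$. From $\mathbf{P_U}\mathbf{A}=\mathbf{A}$ we get $\mathbf{A}^T\mathbf{U}^T\mathbf{U}\mathbf{A}=\mathbf{A}^T\mathbf{A}$, and since $\mathbf{U}\mathbf{A}$ has full column rank, $(\mathbf{U}\mathbf{A})^+=(\mathbf{A}^T\mathbf{U}^T\mathbf{U}\mathbf{A})^{-1}(\mathbf{U}\mathbf{A})^T=(\mathbf{A}^T\mathbf{A})^{-1}\mathbf{A}^T\mathbf{U}^T=\mathbf{A}^+\mathbf{U}^T$, whence $(\mathbf{U}\mathbf{A})^+\mathbf{U}=\mathbf{A}^+\mathbf{P_U}$, and analogously for the other two modes. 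Substituting $\tensor{X}'$ into $\tensor{G}'$ and again collapsing mode products then yields
\begin{equation*}
\tensor{G}'=\tensor{X}\times_1(\mathbf{A}^+\mathbf{P_U})\times_2(\mathbf{B}^+\mathbf{P_V})\times_3(\mathbf{C}^+\mathbf{P_W})=\big(\tensor{X}\times_1\mathbf{P_U}\times_2\mathbf{P_V}\times_3\mathbf{P_W}\big)\times_1\mathbf{A}^+\times_2\mathbf{B}^+\times_3\mathbf{C}^+=\tensor{G}.
\end{equation*}
Since the two CORCONDIA values compare $\tensor{G}'$ and $\tensor{G}$ against the same $R\times R\times R$ superdiagonal $\tensor{I}$ with the same normalization $\|\tensor{I}\|^2$, the identity $\tensor{G}'=\tensor{G}$ makes the diagnostics coincide. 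I expect the only delicate points to be the rank bookkeeping that keeps $\mathbf{U}\mathbf{A}$ full column rank (so the closed-form pseudoinverse applies) and the full-rank Khatri--Rao caveat needed to upgrade the fiber hypothesis to $\mathbf{P_U}\mathbf{A}=\mathbf{A}$.
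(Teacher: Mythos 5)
Your proposal is correct and follows essentially the same route as the paper's proof: identify the compressed PARAFAC factors as $\mathbf{UA}$, $\mathbf{VB}$, $\mathbf{WC}$, reduce $\tensor{G}'$ to $\tensor{X}_{pr}\times_1\mathbf{A}^+\times_2\mathbf{B}^+\times_3\mathbf{C}^+$ with $\tensor{X}_{pr}=\tensor{X}\times_1\mathbf{U}^T\mathbf{U}\times_2\mathbf{V}^T\mathbf{V}\times_3\mathbf{W}^T\mathbf{W}$, and invoke the fiber hypothesis to get $\tensor{X}_{pr}=\tensor{X}$ and hence $\tensor{G}'=\tensor{G}$. You are in fact more careful than the paper at the one delicate step, $(\mathbf{UA})^+=\mathbf{A}^+\mathbf{U}^T$, which the paper asserts without justification (and which fails for a general $\mathbf{U}$ with orthonormal rows) and which you correctly derive from $\mathbf{U}^T\mathbf{U}\mathbf{A}=\mathbf{A}$, itself obtained from the fiber hypothesis via the full-column-rank Khatri--Rao factor of the mode-1 unfolding.
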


\begin{proof}
    Since $\tensor{X}$ has a PARAFAC decomposition, we get
    \begin{equation*}
        \tensor{X}' = \tensor{I} \times_1 \mathbf{UA} \times_2 \mathbf{VB} \times_3 \mathbf{WC}
    \end{equation*}
    and, therefore, the PARAFAC decomposition of $\tensor{X}'$ is given by $\mathbf{A}'=\mathbf{UA}$, $\mathbf{B}'=\mathbf{VB}$ and $\mathbf{C}'=\mathbf{WC}$. As a result, \eqref{corc_core} gives
    \begin{align*}
            \tensor{G}' &= \tensor{X}'\times_1 (\mathbf{UA})^+\times_2 (\mathbf{VB})^+\times_3 (\mathbf{WC})^+\\
                &= \tensor{X}\times_1 (\mathbf{UA})^+\mathbf{U}\times_2 (\mathbf{VB})^+\mathbf{V}\times_3 (\mathbf{WC})^+\mathbf{W}\\
                &= \tensor{X}\times_1 \mathbf{A}^+\mathbf{U}^T\mathbf{U}\times_2 \mathbf{B}^+\mathbf{V}^T\mathbf{V}\times_3 \mathbf{C}^+\mathbf{W}^T\mathbf{W}\\
                &= \tensor{X}_{pr}\times_1\mathbf{A}^+\times_2\mathbf{B}^+\times_3\mathbf{C}^+
    \end{align*}
    where 
   $     \tensor{X}_{pr} = \tensor{X}\times_1 \mathbf{U}^T\mathbf{U}\times_2 \mathbf{V}^T\mathbf{V}\times_3 \mathbf{W}^T\mathbf{W},$
    which can be seen as the projection of $\tensor{X}$ onto the rowspaces of $\mathbf{U}$, $\mathbf{V}$ and $\mathbf{W}$. Finally, since the fibers of $\tensor{X}$ belong in the rowspaces of these matrices, it will hold that $\tensor{X}_{pr}$ = $\tensor{X}$, which in turn gives
    \begin{equation*}
        \tensor{G}' = \tensor{X} \times_1\mathbf{A}^+\times_2\mathbf{B}^+\times_3\mathbf{C}^+ = \tensor{G}
    \end{equation*}
    Therefore, CORCONDIA is preserved.
\end{proof}
\begin{claim}
When $\tensor{X}$ has an exact PARAFAC decomposition and we compress using the transpose of its TUCKER3 factor matrices, then CORCONDIA is preserved.
\end{claim}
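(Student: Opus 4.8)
The plan is to reduce this claim to Claim~\ref{claim_tucker}, which already guarantees that CORCONDIA is preserved whenever $\tensor{X}$ admits an exact PARAFAC decomposition and its mode-$n$ fibers lie in the rowspaces of the respective compression matrices. Since the exact PARAFAC hypothesis is assumed here as well, the whole task reduces to verifying the fiber-containment condition for the specific choice of compression matrices $\mathbf{U}=\mathbf{A}^T$, $\mathbf{V}=\mathbf{B}^T$, $\mathbf{W}=\mathbf{C}^T$, where $\mathbf{A}$, $\mathbf{B}$, $\mathbf{C}$ are the (orthonormal) TUCKER3 factor matrices from \eqref{tucker}.

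First I would confirm that these compression matrices are admissible, i.e.\ that they have orthonormal rows as the setup demands. This is immediate: orthonormality of the tall TUCKER3 factors means $\mathbf{A}^T\mathbf{A}=\mathbf{I}$, so $\mathbf{U}=\mathbf{A}^T$ has orthonormal rows, and likewise for $\mathbf{V}$ and $\mathbf{W}$. Consequently $\mathbf{U}^T\mathbf{U}$, $\mathbf{V}^T\mathbf{V}$, $\mathbf{W}^T\mathbf{W}$ are the orthogonal projectors onto the column spaces of $\mathbf{A}$, $\mathbf{B}$, $\mathbf{C}$, which is exactly the structure exploited in the proof of Claim~\ref{claim_tucker}.

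The key step is to show that each mode-$n$ fiber of $\tensor{X}$ lies in the rowspace of the corresponding compression matrix, equivalently in the column space of the corresponding TUCKER3 factor. Writing a single mode-1 fiber out of \eqref{tucker_out} gives
\begin{equation*}
\tensor{X}(:,j,k)=\sum_{p}\mathbf{a}_p\left(\sum_{q,r}\tensor{G}(p,q,r)\,\mathbf{B}(j,q)\,\mathbf{C}(k,r)\right),
\end{equation*}
which exhibits the fiber as a linear combination of the columns $\mathbf{a}_p$ of $\mathbf{A}$; hence every mode-1 fiber belongs to the column space of $\mathbf{A}$, i.e.\ to the rowspace of $\mathbf{U}=\mathbf{A}^T$. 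By symmetry the same holds for the mode-2 and mode-3 fibers with respect to $\mathbf{B}$ and $\mathbf{C}$. With both hypotheses of Claim~\ref{claim_tucker} verified, its conclusion applies verbatim and CORCONDIA is preserved.

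I expect the only genuine subtlety to be notational rather than mathematical: one must keep the TUCKER3 factors (used to build the compression) distinct from the PARAFAC factors (whose pseudoinverses enter the core in \eqref{corc_core}), and be explicit that it is the orthonormal TUCKER3 factors—guaranteed to exist by the QR argument given earlier—that make the row-orthonormality requirement hold. Once that bookkeeping is in place the claim is essentially a corollary of Claim~\ref{claim_tucker}, since the defining property of a TUCKER3 factor matrix is precisely that it spans the column space of the corresponding mode unfolding.
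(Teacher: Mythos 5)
Your proposal is correct and takes essentially the same route as the paper: both proofs reduce the claim to Claim~\ref{claim_tucker} by checking that the compression matrices $\mathbf{A}^T$, $\mathbf{B}^T$, $\mathbf{C}^T$ have orthonormal rows and that every mode-$n$ fiber of $\tensor{X}$ lies in the corresponding rowspace, then invoking that claim verbatim. The only (immaterial) difference is in how the fiber condition is verified: you read it directly off the outer-product form \eqref{tucker_out}, exhibiting each fiber as a linear combination of factor columns, whereas the paper derives it by applying the transposed factors to \eqref{tucker} to get $\tensor{X}\times_1 \mathbf{A}^T \times_2 \mathbf{B}^T \times_3 \mathbf{C}^T = \tensor{G}$ and then multiplying back to conclude $\tensor{X}_{pr}=\tensor{X}$.
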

\begin{proof}
First, notice that when an exact PARAFAC decomposition exists, then an exact TUCKER3 decomposition will also exist, which can be derived from the PARAFAC decomposition by employing the reduced QR factorization of its factor matrices as shown in \autoref{sec:tens_dec}. Now from \eqref{tucker} we get
    \begin{align*}
        \tensor{X} \times_1 \mathbf{A}^T \times_2 \mathbf{B}^T \times_3 \mathbf{C}^T &= \tensor{G} \implies \\
        \tensor{X} \times_1 \mathbf{A}\mathbf{A}^T \times_2 \mathbf{B}\mathbf{B}^T \times_3 \mathbf{C}\mathbf{C}^T &= \tensor{G}\times_1 \mathbf{A} \times_2 \mathbf{B} \times_3 \mathbf{C} \implies \\
        \tensor{X}_{pr} &= \tensor{X}
    \end{align*}
    and, thus, we conclude that all mode-1, mode-2 and mode-3 fibers belong in the rowspace of $\mathbf{A}^T$, $\mathbf{B}^T$ and $\mathbf{C}^T$, respectively. At this point, all conditions in Claim \autoref{claim_tucker} are satisfied, which means that CORCONDIA is preserved. 
\end{proof}

We should mention that such a compression scheme has also been studied in \cite{bro1998improving} 
in the context of speeding up the calculation of PARAFAC decompositions. That said, even though that work can provide further insight into why this compression scheme is sensible, our approach differs in that instead of looking for optimal compression matrices, it takes the more agnostic path of multiple random compressions. 

\section{Experimental Evaluation}
\label{sec:experiments}
In this section, we present experimental results on the behavior of CORCONDIA on compressed real tensor data. Specifically, we are studying sugar data of size 268$ \times $44$ \times $7 that are known to have trilinear structure \cite{bro2003new}. All experiments were run on a system with an Intel(R) Core(TM) i5-7300HQ and 8GB of RAM. Matlab along with Tensor Toolbox \cite{tensortoolbox} was used throughout the whole process, except for the calculation of all PARAFAC and TUCKER3 decompositions for which N-way Toolbox \cite{nwaytoolbox} was utilized. 

We have experimented with the following three types of compression:
\begin{itemize}
    \item \textbf{Gaussian} - the tensor is multiplied modewise with matrices whose elements are independent identically distributed random variables following the standard normal distribution.
    \item \textbf{Orthonormal} - the tensor is multiplied modewise with random matrices with orthonormal rows. These matrices are obtained from the reduced QR decomposition of the transpose of a Gaussian compression matrix.
    \item  \textbf{Tucker} - the tensor is multiplied modewise with the transpose of its  TUCKER3 factor matrices, which in fact results in a compressed tensor identical to the TUCKER3 core.
\end{itemize}

\begin{figure*}[ht]
\centering
$\begin{array}{cc}
\includegraphics[width=\columnwidth]{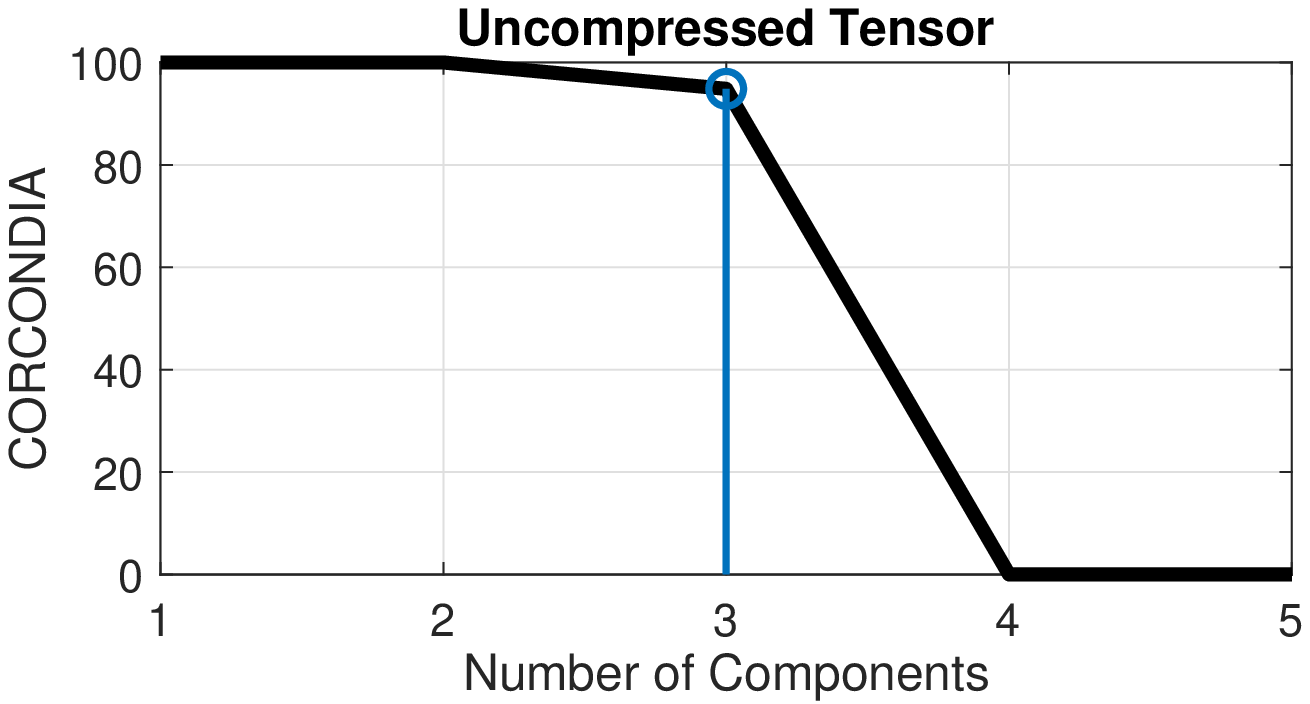} &
\includegraphics[width=\columnwidth]{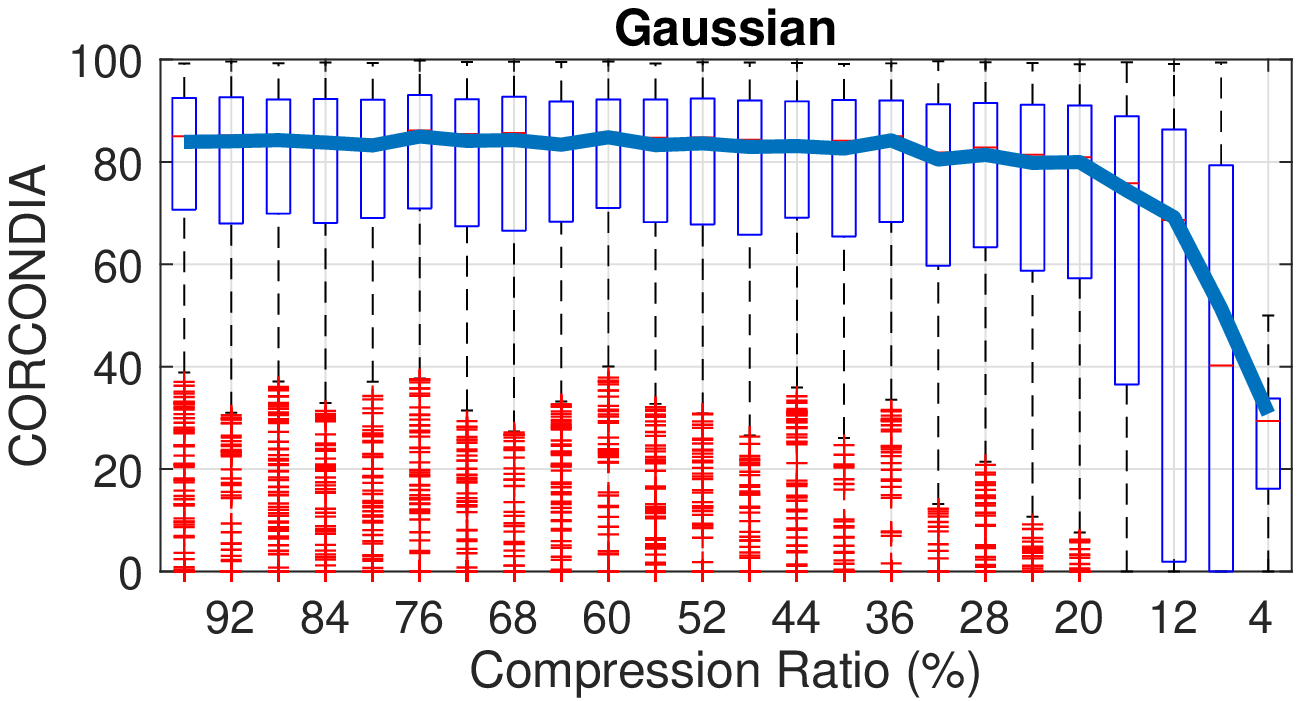}\\
\includegraphics[width=\columnwidth]{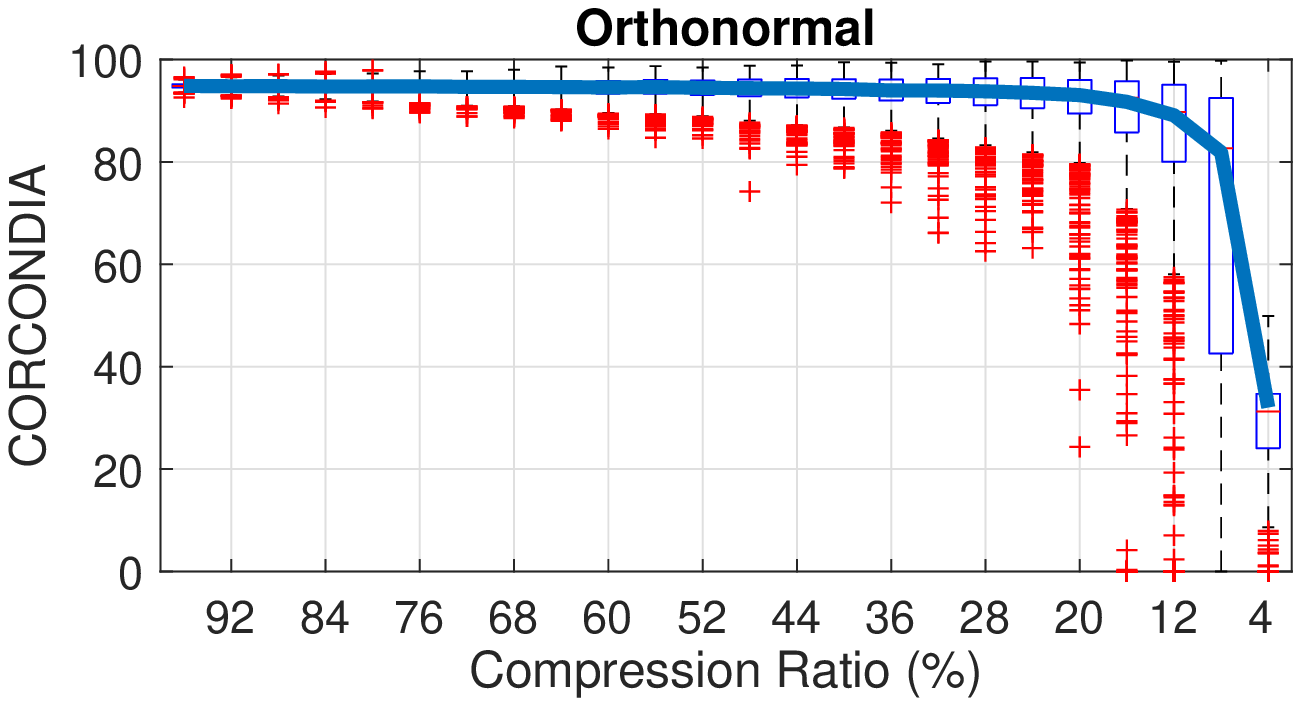} &
\includegraphics[width=\columnwidth]{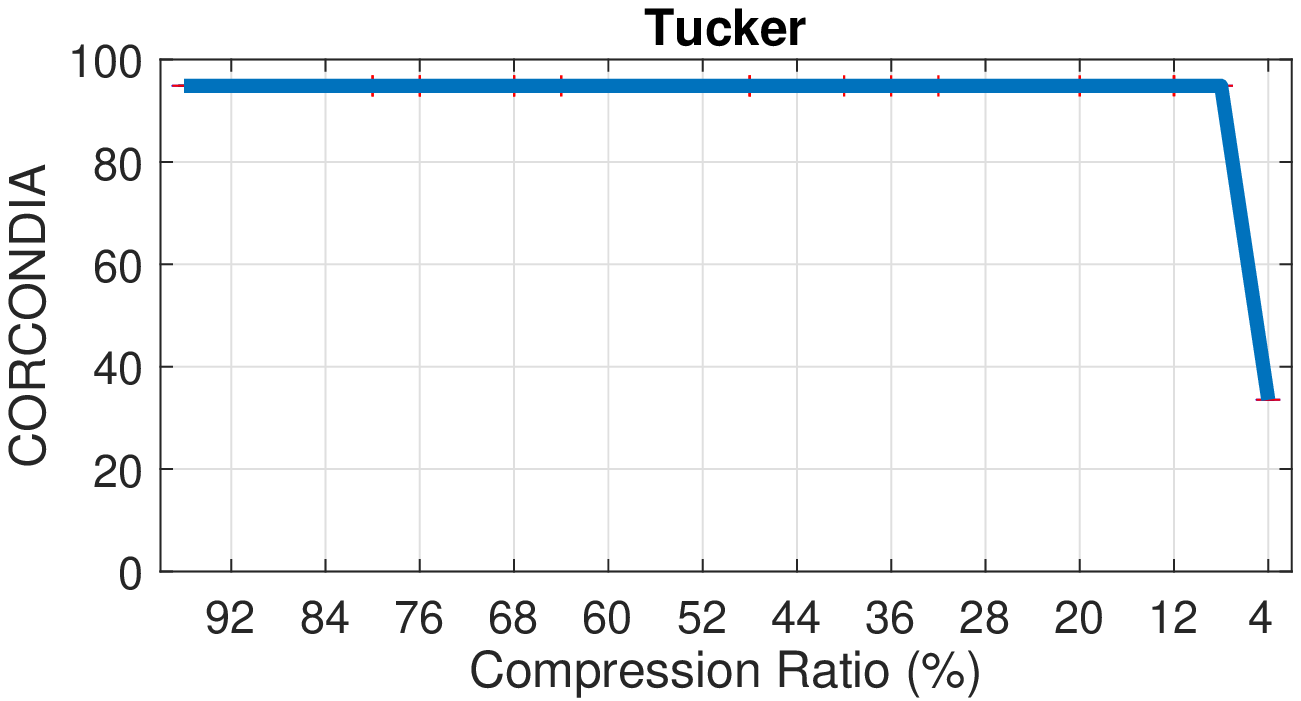}
\end{array}
$
\caption{Comparison of Gaussian, Orthonormal and Tucker compressions for three PARAFAC components.}
\label{fig:all}
\end{figure*}

In \autoref{fig:all} we present the results for all three types of compression. Specifically, the upper left plot shows the CORCONDIA values of the uncompressed tensor for multiple PARAFAC decompositions with different number of components. The vertical line indicates the fixed number of components at which the compressed tensors are tested, while the circle shows the value of CORCONDIA that they are expected to retain for various compression types and ratios. 

The rest of the plots show CORCONDIA for each of the aforementioned compression types, where 1000 samples were used for Gaussian and Orthonormal compression, and 10 samples for Tucker compression. They contain the boxplots for each compression ratio along with the corresponding outliers, which are denoted by `+'. All calculations were done after negative CORCONDIA values were set equal to zero. Further, the sample means for all compression ratios  are represented by the thick lines, and they were calculated after the outliers were first properly smoothed. Finally, the compression ratio is the percentage by which we reduce the size of the first and the second mode; we do not compress in the third mode since it is already really small. For instance, for a compression ratio of 50\%, we get a compressed tensor of size 134$ \times $22$ \times $7. We reserve more detailed compression ratio schemes for the extended version of this work.


Note that CORCONDIA seems to not be preserved only when it starts falling from 100 to 0. This occurs at the PARAFAC model with 3 components, and this is why only this case is discussed in this paper. For 1, 2, 4 and 5 components, it is almost perfectly preserved for all compression types and for all compression ratios up to even 4\%. 

Examining the plots in \autoref{fig:all}, it is clear that Tucker compression achieves the best performance by managing to perfectly preserve CORCONDIA up to 8\% compression. Orthonormal compression comes second by generally preserving CORCONDIA up to about 20\% compression, although in the mean it retains the value high enough up to 8\% compression. 
In fact, we can expect very similar performance even with very few samples since the variance is very small. On the other hand, Gaussian compression has clearly a much worse performance, not only because it rarely retains the original CORCONDIA value, but also because for multiple compressions the variance is too large. That said, it can retain in the mean a high enough CORCONDIA up to 20\% compression.

At this point, one might feel tempted to consider Tucker compression as the undisputed winner. However, we should not forget that this compression type requires the calculation of the TUCKER3 decomposition of the uncompressed tensor which can actually become very  computationally expensive. On the other hand, the Gaussian and Orthonormal methods can perform the compression much faster since we can generate the compression matrices easier.

\section{Conclusion}
\label{sec:conclusions}
We study the effects of various popular and practical tensor compression schemes in the CORCONDIA of a tensor. We provide theoretical insights on conditions that satisfy perfect retention of CORCONDIA upon compression, along with experimental results that verify our analysis. Further, we evaluate the effect of different random compression schemes to CORCONDIA. Experimental results on real data indicate that it is possible to perform significantly tight compressions without having a serious impact on the value of CORCONDIA. Therefore, this method can be used as a tool to mitigate the consequences of the high time complexity of the calculations that CORCONDIA has to perform, especially on big tensor data.

\section{Acknowledgements}
{\scriptsize
We are grateful to Rasmus Bro and Anne Bech Risum for their valuable feedback, and for providing us with real chemical data to evaluate our compression methods.  Research was supported by the National Science Foundation CDS\&E Grant no. OAC-1808591 and by the Department of the Navy, Naval Engineering Education Consortium under award no. N00174-17-1-0005. Any opinions, findings, and conclusions or recommendations expressed in this material are those of the author(s) and do not necessarily reflect the views of the funding parties.
}

{\scriptsize
}

\balance
\bibliographystyle{IEEEtran}
\bibliography{BIB/vagelis_refs.bib}

\end{document}